\documentclass{article}
\usepackage{iclr2027_conference,times}
\usepackage[T1]{fontenc}
\usepackage{amsmath,amssymb,amsthm}
\usepackage{booktabs,graphicx,float,microtype,tabularx,enumitem}
\usepackage{placeins,needspace}
\usepackage{hyperref}
\usepackage{url}

\usepackage{subcaption} 
\usepackage{algorithm,algpseudocode}

\newtheorem{proposition}{Proposition}

\title{Continual Data Unlearning in Diffusion Models via Transition-Based Regularization}
\author{
Sunbeom Jeong$^{1}$\thanks{These authors contributed equally to this work.} \quad
Sehwan Kim$^{1*}$ \quad
Sangwoo Hong$^{2}$ \quad
Jungwoo Lee$^{1}$ \\
$^{1}$Seoul National University \qquad
$^{2}$Konkuk University
}

\iclrarxivcopy

\begin{document}
\maketitle
\begin{abstract}
Data unlearning in diffusion models aims to remove the influence of specific training examples without suppressing the broader concepts they represent.
However, when deletion requests arrive sequentially, updates for new requests can degrade generative utility and undermine earlier deletions.
We propose a continual data unlearning framework that uses completed deletion transitions as directional references to regularize future updates.
For each request, we record changes in denoiser responses on the same fixed noisy inputs before and after unlearning.
Rather than matching full post-deletion responses, we apply a one-sided penalty that discourages reversal along the recorded directions relative to the post-deletion references, while leaving orthogonal response changes and progress beyond these references unpenalized.
To keep storage independent of the number of requests, we maintain a fixed-capacity bank of representative transition records. Records are selected based on the local sensitivity of progress along their recorded directions to parameter updates, allowing them to be retained even when their penalties are inactive.
Empirical evaluations show that the proposed framework achieves a better balance between deletion persistence and generative utility than existing unlearning baselines as requests accumulate, using only a small transition memory.
\end{abstract}


\section{Introduction}
\label{sec:introduction}

Diffusion models have demonstrated remarkable capabilities in generating high-quality images~\citep{rombach2022high, ramesh2022hierarchical}.
These capabilities are enabled by training on massive image--text datasets collected from the internet~\citep{schuhmann2022laion}.
However, the inclusion of copyrighted material and personal data in these datasets raises a range of ethical and legal concerns~\citep{carlini2023extracting, dubinski2025cdi}.
For instance, diffusion models can memorize and reproduce individual training samples, potentially exposing privacy-sensitive information or replicating copyrighted content~\citep{carlini2023extracting}.
Such concerns motivate mechanisms for addressing training-data deletion requests.
This need is further underscored by privacy laws such as the General Data Protection Regulation (GDPR) and the California Consumer Privacy Act (CCPA), which recognize individuals' rights to request the deletion of their personal data under applicable conditions.
A straightforward approach is to exclude request samples from the training set and retrain the model from scratch.
However, retraining large-scale diffusion models for every deletion request is computationally prohibitive~\citep{alberti2025data}.
This motivates \emph{machine unlearning}, which aims to efficiently remove the influence of these samples without full retraining while preserving model utility~\citep{bourtoule2021machine,alberti2025data}.

Research on unlearning in diffusion models has explored two distinct objectives: \emph{concept unlearning} and \emph{data unlearning}.
Concept unlearning aims to suppress the model's ability to generate particular semantic concepts, such as objects, identities, or artistic styles, as explored by methods such as ESD and Forget-Me-Not~\citep{gandikota2023erasing, zhang2024forget}.
In contrast, data unlearning seeks to remove the influence of specific training examples rather than the broader concepts they represent~\citep{alberti2025data, shi2026retrack}.
Given a training dataset $X$ and a subset $A \subset X$ requested for deletion, the model should ideally behave as though it had been trained from scratch on $X \setminus A$.
For example, unlearning a particular photograph of a dog should not eliminate the model's ability to generate dogs.
The model should still produce high-quality, text-aligned images for prompts associated with deleted samples, provided that the relevant concepts remain supported by the retained data.
Concept-level suppression may therefore restrict generative capabilities beyond the scope of an individual deletion request.
Recent methods directly pursue this datapoint-level objective: SISS introduces an importance-sampling-based unlearning objective~\citep{alberti2025data}, while ReTrack redirects denoising trajectories toward neighboring examples in the retained data~\citep{shi2026retrack}.

\begin{figure}[t]
    \centering
    \includegraphics[width=\linewidth]{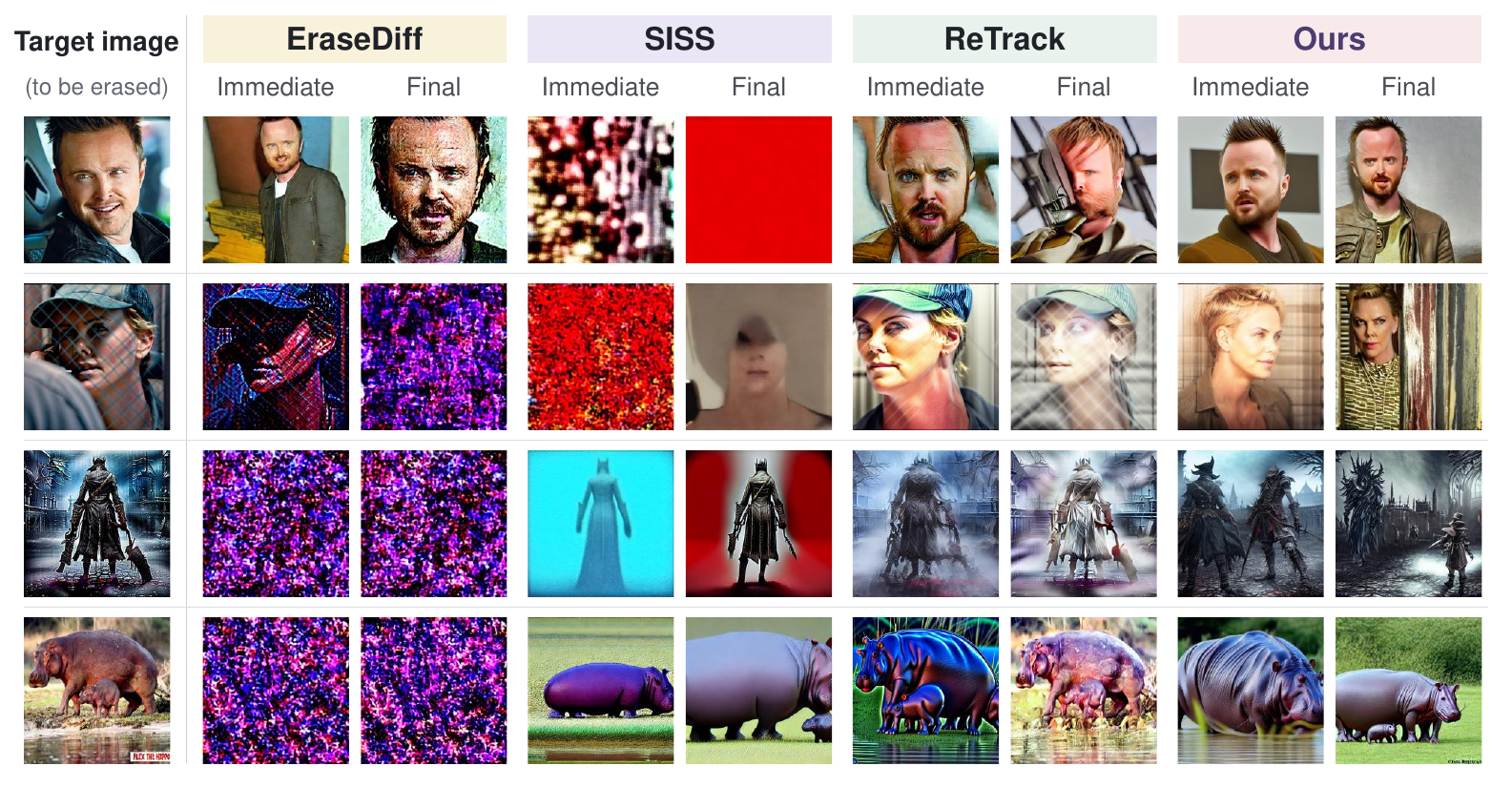}
    \caption{Qualitative comparison of continual data unlearning.
    Each row shows a target image to be erased and samples generated using its associated prompt.
    For each method, “Immediate” denotes generation immediately after deleting the target, while “Final” denotes generation after completing the subsequent deletion requests.
    The prompts used for the generations are provided in Appendix~\ref{app:fig_1_prompt}.
}
    \label{fig:fig1}
\end{figure}

In practical deployments, however, deletion requests may arrive sequentially rather than as a single, predetermined batch, necessitating \emph{continual data unlearning}.
Recent work has explored continual concept unlearning~\citep{lee2026continual,lacu}. 
For data unlearning, however, applying existing methods~\citep{alberti2025data,shi2026retrack} sequentially raises two key challenges.
First, repeated updates may degrade image quality or text--image alignment, including for prompts associated with deleted samples whose underlying concepts remain supported by retained data.
Failure to reproduce deleted samples does not, by itself, demonstrate successful unlearning, as it may reflect degraded generative capabilities rather than the selective removal of those samples' influence.
Second, updates for new requests may weaken earlier deletions, allowing previously deleted samples to reappear in generated outputs.
Figure~\ref{fig:fig1} illustrates both failure modes in existing baselines as deletion requests accumulate.
Continual data unlearning must therefore satisfy each new request while preserving generative utility and maintaining the effectiveness of all prior deletions.

In this paper, we propose a framework for continual data unlearning in diffusion models.
Our key idea is to preserve the response change induced by a completed deletion rather than match the full post-deletion response.
For each request, we record denoiser response changes on the same fixed noisy inputs before and after unlearning.
During subsequent requests, a one-sided penalty discourages reversal along the recorded directions relative to the post-deletion references, while leaving orthogonal response changes and further movement along these directions unpenalized.
To keep memory usage independent of request count, we maintain a fixed-capacity bank of representative transition records.
Records are selected based on how parameter updates locally affect response projections along the recorded directions. This criterion can distinguish records even when their penalties are inactive.

The key contributions of this paper are summarized as follows:
\begin{itemize}
    \item We introduce a transition-based perspective on continual data unlearning, with a one-sided regularizer that discourages reversal of deletion-induced response changes while leaving orthogonal changes and further progress unpenalized.

    \item We develop a fixed-capacity transition bank that selects records based on the local sensitivity of directional progress to parameter updates, even when their penalties are inactive.

    \item We demonstrate a better balance between deletion persistence and generative utility than existing baselines as requests accumulate, using a small transition memory.
\end{itemize}
\section{Preliminaries and Problem Setup}
\label{sec:preliminary}

\subsection{Data Unlearning in Diffusion Models}
\label{sec:datapoint_unlearning}

Data unlearning aims to remove the influence of selected
training examples from a pretrained diffusion model
\citep{alberti2025data}.
We consider a diffusion model $\theta$ pretrained
on a dataset $X = \{x_1, \ldots,x_n\}$ of $n$ datapoints.
A deletion request specifies a subset
$A = \{a_1, \ldots,a_k\} \subset X$
of $k$ datapoints to be unlearned.
The goal is to efficiently update $\theta$ to approximate
the behavior of a model retrained from scratch on the
retained set $R = X \setminus A$.
We consider this goal in both unconditional and
text-to-image diffusion models.
In particular, for text-to-image models, each datapoint
consists of an image and its associated prompt.
Removing a target image's influence should not compromise
the model's ability to generate images aligned with the
associated prompt, provided that the relevant concepts
remain supported by the retained data.

\subsection{Continual Data Unlearning}
\label{sec:continual_unlearning}
In practice, deletion requests may arrive sequentially
rather than as a single batch.
We consider a sequence of requests, where the $t$-th request
provides a target datapoint $a_t^\star \in X$.
Starting from the pretrained model $\theta_0$,
we update $\theta_{t-1}$ to $\theta_t$
upon receiving the $t$-th deletion request.
\emph{Once a request is completed, its raw datapoint is discarded
and is no longer available for subsequent updates}.

To specify the cumulative unlearning objective, we define
\begin{equation}
    A_t = \{a_1^\star, \ldots, a_t^\star\},
    \qquad
    R_t = X \setminus A_t.
    \label{eq:continual_deletion_sets}
\end{equation}
Here, $A_t$ denotes the cumulative deletion set, not a dataset
stored for subsequent updates.
At each stage, the goal is to remove the influence of
the current target while maintaining the effectiveness
of earlier deletions and preserving generative utility,
without accessing previously deleted raw datapoints.
After all $k$ requests, the final model $\epsilon_{\theta_k}$
should therefore approximate the behavior of a model
retrained from scratch on $R_k$, accounting for all
requested deletions rather than only the most recent one.
\section{Transition-Based Regularization for Continual Unlearning}
\label{sec:method}

\begin{figure}[t]
    \centering
    \includegraphics[width=\linewidth]{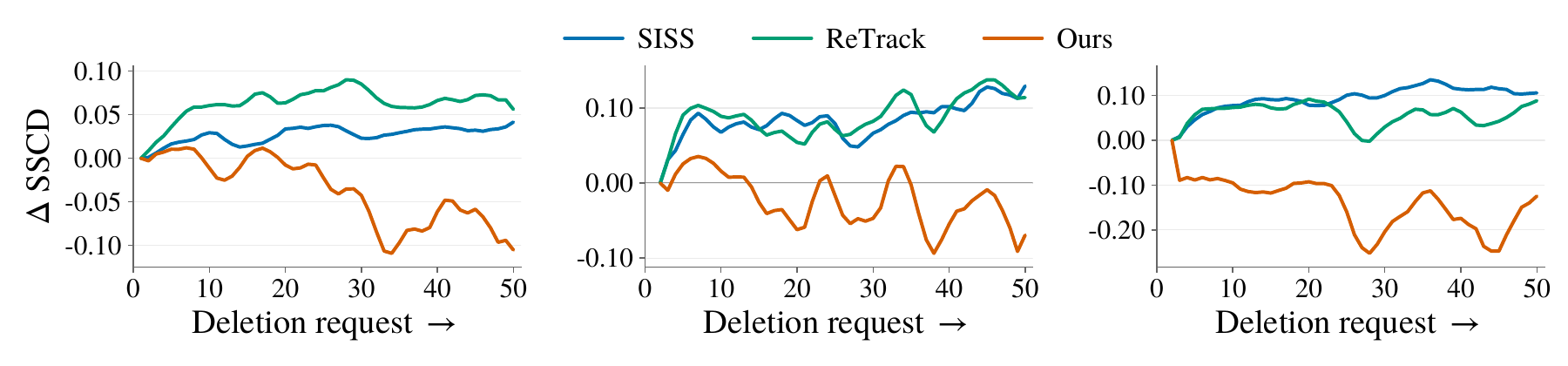}
    \caption{Difference in SSCD for the first unlearned datapoint
across sequential deletion requests on CelebA-HQ.
Each path tracks a different example from an independent deletion sequence. The x-axis denotes the number of deletion requests processed.
The y-axis shows this datapoint's SSCD after each request minus
its SSCD immediately after the first deletion.
Positive values indicate that SSCD is higher than it was
immediately after the first deletion.}

    \label{fig:sscd-increase}
    \label{fig:fig2}
\end{figure}

\paragraph{Motivation.}
Continual data unlearning must preserve generative utility while maintaining the effects of earlier deletions as requests accumulate.
However, repeated unlearning can degrade image quality (Figure~\ref{fig:fig1}), and the SSCD score for the first deletion target can rebound after an initial decline (Figure~\ref{fig:fig2}).
This rebound suggests that later updates can partially reverse the effects of an earlier deletion.

\paragraph{Overview.}
For each deletion request, we record the changes in the model's
denoising responses on fixed probes.
During subsequent requests, bounded gradient corrections discourage
updates that counteract these recorded changes, without requiring
the full responses to remain unchanged.
To limit storage, we select a fixed-size memory of records based on
their local sensitivity to parameter updates.
The overall algorithm is given in Appendix~\ref{app:algorithm}.

\subsection{Recording Response Transitions}
\label{sec:record}
Building on continual-learning methods that regularize future updates using stored model responses~\citep{buzzega2020dark}, we record the response changes induced by each deletion and use them as references for subsequent requests. To characterize these changes, while the target $a_i^\star$ is available, we construct $Q$ fixed evaluation instances, termed \emph{probes}, denoted by $\xi_{i,q}$, $q=1,\ldots,Q$. Each probe consists of a diffusion timestep, a noisy target obtained by forward noising $a_i^\star$ to that timestep, and, when applicable, its conditioning. These components remain fixed throughout all evaluations, such that only the model parameters vary.

Let $\theta_{i-1}$ and $\theta_i$ denote the model parameters immediately before and after processing request $i$, respectively. We denote the corresponding denoiser output for probe $\xi_{i,q}$ by $\Phi(\theta;\xi_{i,q})$.
The corresponding pre- and post-request responses are
\begin{equation}
y_{i,q}^{-}=\Phi(\theta_{i-1};\xi_{i,q}),
\qquad
y_{i,q}^{+}=\Phi(\theta_i;\xi_{i,q}).
\end{equation}

We record
\begin{equation}
d_{i,q}=y_{i,q}^{+}-y_{i,q}^{-},
\qquad
\tau_{i,q}=\langle y_{i,q}^{+},d_{i,q}\rangle,
\label{eq:committed-transition}
\end{equation}
where $\langle\cdot,\cdot\rangle$ denotes the Euclidean inner product.
The vector $d_{i,q}$ captures the change in the probe response
over request $i$.
The scalar $\tau_{i,q}$ is a post-request reference
to detect later reversal along $d_{i,q}$.

The probes and their recorded quantities form the transition record
\begin{equation}
\mathcal T_i
=
\left\{
(\xi_{i,q},d_{i,q},\tau_{i,q})
:
\lVert d_{i,q}\rVert_2>0
\right\}.
\label{eq:transition-record}
\end{equation}
Later checks use $\mathcal T_i$ and the current model without
accessing the raw deleted datapoint or an earlier model checkpoint. Probe construction and storage details are given in Appendix~\ref{app:probe}.

\subsection{Measuring and Penalizing Reversal}
\label{sec:penalty}

Consider a probe whose response changed from $y_{i,q}^{-}$ to
$y_{i,q}^{+}$ during deletion.
We ask whether a later response has moved back along this change,
using the post-request response $y_{i,q}^{+}$ as the reference point.
We measure this movement by the signed margin
\begin{equation}
\begin{aligned}
m_{i,q}(\theta)
&=
\left\langle
\Phi(\theta;\xi_{i,q})-y_{i,q}^{+},
\frac{d_{i,q}}{\lVert d_{i,q}\rVert_2}
\right\rangle
&=
\frac{
\langle \Phi(\theta;\xi_{i,q}),d_{i,q}\rangle-\tau_{i,q}
}{
\lVert d_{i,q}\rVert_2
}.
\end{aligned}
\label{eq:transition-margin}
\end{equation}
The margin is zero immediately after deletion and equals
$-\lVert d_{i,q}\rVert_2$ at the model state immediately before processing request $i$.
A negative margin indicates that the response has moved backward
along the recorded transition, even if it has not returned to
the pre-request response.
The second expression allows this margin to be evaluated without
storing $y_{i,q}^{+}$.

We penalize only the negative margins.
\begin{equation}
G_{i,q}(\theta)
=
[-m_{i,q}(\theta)]_+^2,
\qquad
[x]_+=\max\{x,0\}.
\label{eq:transition-loss}
\end{equation}
Unlike full-response matching, this penalty leaves changes
orthogonal to $d_{i,q}$ and further movement along $d_{i,q}$
unpenalized.

\subsection{Applying Corrections}
\label{sec:loop}

At each optimizer step during request $t$, we first check the next probe
in the newest record $\mathcal T_{t-1}$.
If its margin is negative, indicating reversal, we use the corresponding
penalty gradient to form the correction.
Otherwise, we select an older retained record according to the service-weight schedule described in Section~\ref{sec:compiler} and check its next probe.
We cycle through the probes within each record and apply at most one
correction per step.

Let $L_t^{\mathrm{base}}(\theta)$ denote the ReTrack-based unlearning
objective~\citep{shi2026retrack} for the current request, with gradient
$g^{\mathrm{base}}=\nabla_\theta L_t^{\mathrm{base}}(\theta)$.
For a selected probe $\xi_{i,q}$ with $m_{i,q}(\theta)<0$, we apply
the historical correction
\begin{equation}
g = g^{\mathrm{base}} +
\operatorname{clip}_{\rho\lVert g^{\mathrm{base}}\rVert_2}
\left(\omega\nabla_\theta G_{i,q}(\theta)\right),
\label{eq:bounded-guard-gradient}
\end{equation}
where $\operatorname{clip}_c$ limits the Euclidean norm to at most $c$,
$\omega\geq0$ scales the correction, and $\rho>0$ bounds its norm
relative to the base gradient.
If no checked probe has a negative margin, $g=g^{\mathrm{base}}$.
This clipped soft correction discourages reversal without enforcing a nonnegative margin at every step.

\subsection{Maintaining a Fixed-Size Memory}
\label{sec:compiler}

We keep the newest record $\mathcal T_t$ separately throughout request $t+1$
before considering it for long-term storage.
Let $\mathcal I_t$ index the older records retained after request $t$, with $|\mathcal I_t|\leq K_{\mathrm{mem}}$.
At the end of request $t$, the previous bank and the preceding record
$\mathcal T_{t-1}$ form the candidate set:
\begin{equation}
\mathcal C_t=\mathcal I_{t-1}\cup\{t-1\},
\qquad t\geq2,
\label{eq:compiler-candidate-pool}
\end{equation}
with $\mathcal C_1=\mathcal I_1=\emptyset$.

\paragraph{Selection by local sensitivity.}
With limited memory, we prioritize records whose margins are sensitive
to different update directions rather than retaining many with similar
responses.
However, current penalty gradients vanish for probes with
nonnegative margins and thus cannot distinguish their sensitivity to
later updates, which may still reverse earlier deletions.
We therefore examine the margin gradients
$\nu_{i,q}=\nabla_\theta m_{i,q}(\theta_t)$.
For a small parameter update $\Delta\theta$,
\begin{equation}
m_{i,q}(\theta_t+\Delta\theta)
\approx
m_{i,q}(\theta_t)
+
\langle \nu_{i,q},\Delta\theta\rangle.
\label{eq:local-margin-sensitivity}
\end{equation}
A negative inner product indicates a decrease in the margin,
even when the current penalty is zero.
Similar gradient directions therefore identify margins that tend
to decrease under similar local updates.
We use this similarity to reduce redundancy among retained records.

To compare these directions efficiently, we apply the same
CountSketch-style linear map to all candidate margin gradients $S_t$~\citep{wang2026opus}.
We normalize the sketched gradients to compare directions rather
than magnitudes, then concatenate them in a fixed probe order
to form each record's signature:
\begin{equation}
v_{i,q}=\frac{S_t\nu_{i,q}}{\lVert S_t\nu_{i,q}\rVert_2},
\qquad
u_i=\frac{1}{\sqrt Q}[v_{i,1};\ldots;v_{i,Q}],
\label{eq:boundary-signature}
\end{equation}
where concatenation follows a fixed probe order.

For a nonempty candidate set, we select records so that convex combinations of their signatures approximate all candidate signatures.
\begin{equation}
\mathcal I_t
\in
\arg\min_{\substack{
\mathcal J\subseteq\mathcal C_t\\
|\mathcal J|=\min(K_{\mathrm{mem}},|\mathcal C_t|)
}}
\sum_{i\in\mathcal C_t}
\min_{\eta_i\in\Delta_{\mathcal J}}
\left\lVert
u_i-\sum_{j\in\mathcal J}\eta_{i,j}u_j
\right\rVert_2^2,
\label{eq:main-boundary-cover}
\end{equation}
where $\Delta_{\mathcal J}$ is the simplex over the selected records.
The objective selects signatures that jointly approximate the
candidates' local sensitivity directions.

\paragraph{Allocating future checks.}
Each record carries a positive \emph{service weight} $\mu_i$ for scheduling future checks.
When a candidate record is discarded, we redistribute its weight among the selected records according to its fitted coefficient vector $\eta_i^\star$.
Let $\boldsymbol{\mu}$ collect the selected records'
weights before redistribution:
\begin{equation}
\boldsymbol{\mu}^{+}
=
\boldsymbol{\mu}
+
\sum_{i\in\mathcal C_t\setminus\mathcal I_t}
\mu_i\eta_i^\star.
\label{eq:scheduling-mass-transfer}
\end{equation}
The fitted coefficients sum to one for each discarded record, so this redistribution preserves total service weight.
The normalized weights determine the next request's check schedule, not record selection.

For fixed $K_{\mathrm{mem}}$ and $Q$, we retain at most $K_{\mathrm{mem}}+1$ transition records between requests and compute at most $(K_{\mathrm{mem}}+1)Q$ progress gradients per memory update.
Both bounds are independent of the number of deletion requests.

\section{Experiments}
\label{sec:experiments}

\subsection{Experimental Setup}

\paragraph{Diffusion models and datasets.}
Following prior work on data unlearning in diffusion models~\citep{alberti2025data,shi2026retrack}, we evaluate our method
on unconditional diffusion models for CIFAR-10 and CelebA-HQ, and on Stable Diffusion v1.4 for text-to-image generation.
For the unconditional models, each deletion sequence consists of 50 image deletions.
For Stable Diffusion v1.4, we perform sequential unlearning on 48 memorized prompt--image pairs from SISS and assess general utility using a separate set of 1,000 COCO captions~\citep{lin2014microsoft}.

\paragraph{Evaluation metrics.}
We assess unlearning effectiveness using the Self-Supervised Copy Detection (SSCD) score~\citep{pizzi2022sscd}, which measures object-level similarity between generated images and the corresponding deletion targets.
For unconditional diffusion models, we evaluate generation quality using Fr\'echet Inception Distance (FID)~\citep{heusel2017gans}.
On CelebA-HQ, we additionally compute the negative log-likelihood (NLL) of deletion targets using the exact likelihood formulation
of~\citet{song2021scorebased}, with higher NLL indicating lower likelihood assigned to these targets.
For text-to-image generation, we evaluate image quality using CLIP-IQA~\citep{wang2023clipiqa} and image--text alignment using the
CLIP score~\citep{radford2021learning}, which measures semantic consistency between generated images and their corresponding prompts.
Following SISS, we also report the unlearning success rate.
For each deletion target, we generate 16 samples from the final model using its corresponding prompt.
Unlearning is considered successful for a target if none of these samples is identified as memorized and their mean CLIP-IQA score is at least $0.35$.
To assess general utility, we further report CLIP-IQA and CLIP scores for images generated from the COCO captions.

\paragraph{Baselines.}
We compare our method with Naive (retain-only fine-tuning), NegGrad~\citep{golatkar2020eternal}, EraseDiff~\citep{wu2025erasing}, SISS~\citep{alberti2025data}, and ReTrack~\citep{shi2026retrack}. We match the pretrained initialization, deletion orders, and the number of base optimization steps per request across methods.
Each method uses its own updated model as the starting point for the next request.
For all methods, raw datapoints from completed deletion requests are no longer available for subsequent updates.

\paragraph{Implementation details.}
For CIFAR-10 and CelebA-HQ, we use pretrained diffusion models from \citet{ho2020denoising}.
For SSCD evaluation, we apply forward noising to each target image at diffusion timestep $250$ using a fixed noise, then reconstruct it with 250 DDPM denoising steps, following SISS~\citep{alberti2025data} and ReTrack~\citep{shi2026retrack}.
For Stable Diffusion v1.4, we generate 16 images per prompt variant using 50-step DDIM sampling with a guidance scale of $7.5$ and the same initial noise across methods.
Further details are in Appendix~\ref{app:main-table-protocol}.

\subsection{Continual Unlearning Performance}
\label{sec:results}

\begin{table}[H]
\caption{Results after sequentially unlearning 50 datapoints on CelebA-HQ and CIFAR-10.
We report mean and standard deviation over 3 runs with randomly sampled target sets and deletion orders.}
\label{tab:main-unconditional}
\label{tab:main-cifar50}
\label{tab:main-celeba50}
\centering
\small
\setlength{\tabcolsep}{5pt}
\renewcommand{\arraystretch}{1.12}
\begin{tabular*}{\linewidth}{@{\extracolsep{\fill}}lrrrrr@{}}
\toprule
& \multicolumn{3}{c}{CelebA-HQ}
& \multicolumn{2}{c}{CIFAR-10} \\
\cmidrule(lr){2-4}\cmidrule(l){5-6}
Method
& \multicolumn{1}{c}{SSCD $\downarrow$}
& \multicolumn{1}{c}{FID $\downarrow$}
& \multicolumn{1}{c}{NLL $\uparrow$}
& \multicolumn{1}{c}{SSCD $\downarrow$}
& \multicolumn{1}{c@{}}{FID $\downarrow$} \\
\midrule
Naive
& $0.8595\,{\scriptstyle\pm 0.0017}$
& $20.41\,{\scriptstyle\pm 4.84}$
& $1.3281\,{\scriptstyle\pm 0.0296}$
& $0.5436\,{\scriptstyle\pm 0.0099}$
& $11.38\,{\scriptstyle\pm 1.02}$ \\
NegGrad
& $0.1131\,{\scriptstyle\pm 0.0153}$
& $441.47\,{\scriptstyle\pm 95.44}$
& $18.4781\,{\scriptstyle\pm 1.0963}$
& $0.0539\,{\scriptstyle\pm 0.0507}$
& $431.07\,{\scriptstyle\pm 64.01}$ \\
EraseDiff
& $0.2008\,{\scriptstyle\pm 0.0123}$
& $141.31\,{\scriptstyle\pm 87.25}$
& $5.8061\,{\scriptstyle\pm 0.1735}$
& $0.4567\,{\scriptstyle\pm 0.0110}$
& $33.72\,{\scriptstyle\pm 23.26}$ \\
\midrule
SISS
& $0.6781\,{\scriptstyle\pm 0.0267}$
& $38.80\,{\scriptstyle\pm 31.13}$
& $1.3819\,{\scriptstyle\pm 0.0301}$
& $0.4893\,{\scriptstyle\pm 0.0135}$
& $\mathbf{11.16}\,{\scriptstyle\pm 2.73}$ \\
ReTrack
& $0.4110\,{\scriptstyle\pm 0.0686}$
& $31.48\,{\scriptstyle\pm 9.53}$
& $2.2861\,{\scriptstyle\pm 0.1426}$
& $0.4539\,{\scriptstyle\pm 0.0240}$
& $16.80\,{\scriptstyle\pm 2.56}$ \\
\textbf{Ours}
& $\mathbf{0.3965}\,{\scriptstyle\pm 0.0739}$
& $\mathbf{25.51}\,{\scriptstyle\pm 6.18}$
& $\mathbf{2.3421}\,{\scriptstyle\pm 0.1286}$
& $\mathbf{0.4461}\,{\scriptstyle\pm 0.0253}$
& $16.04\,{\scriptstyle\pm 2.80}$ \\
\bottomrule
\end{tabular*}
\end{table}

\paragraph{Continual unlearning on unconditional models.}
Table~\ref{tab:main-unconditional} reports the results after sequentially
unlearning 50 datapoints on CelebA-HQ and CIFAR-10.
Naive yields relatively low FID but the highest SSCD on both datasets,
indicating greater reconstruction similarity to the deletion targets.
Conversely, NegGrad achieves the lowest SSCD and, on CelebA-HQ, the
highest target NLL, but its FID exceeds 430 on both datasets.
These results highlight that favorable unlearning metrics alone can
coincide with severe degradation in generation quality.
EraseDiff exhibits a similar trade-off on CelebA-HQ, achieving low
SSCD at an FID of 141.31.

With $K_{\mathrm{mem}}=4$, our method improves all reported metrics
over ReTrack on both datasets.
On CelebA-HQ, it reduces FID by 19.0\% relative to ReTrack while
achieving lower SSCD and higher target NLL, and also outperforms
SISS across all metrics.
Although SISS achieves a lower FID on CIFAR-10, our method attains
lower SSCD while maintaining generation quality
comparable to ReTrack.
Overall, these results support a favorable balance between unlearning
effectiveness and generation quality, without the severe quality
degradation observed with NegGrad.


\begin{table}[H]
\caption{Results after sequentially unlearning 48 memorized datapoints from Stable Diffusion v1.4.
We evaluate unlearning performance using prompts corresponding to the deletion targets and general generation utility using COCO captions.
We report the mean and standard deviation over three runs with different deletion orders.}
\label{tab:main-sd48}
\centering
\small
\setlength{\tabcolsep}{2.3pt}
\renewcommand{\arraystretch}{1.12}

\resizebox{\linewidth}{!}{%
\begin{tabular}{@{}lcccccc@{}}
\toprule
& \multicolumn{4}{c}{Unlearning performance}
& \multicolumn{2}{c}{General utility (COCO)} \\
\cmidrule(lr){2-5}
\cmidrule(l){6-7}
Method
& \multicolumn{1}{c}{Success (\%) $\uparrow$}
& \multicolumn{1}{c}{CLIP-IQA $\uparrow$}
& \multicolumn{1}{c}{SSCD $\downarrow$}
& \multicolumn{1}{c}{CLIP score $\uparrow$}
& \multicolumn{1}{c}{CLIP-IQA $\uparrow$}
& \multicolumn{1}{c@{}}{CLIP score $\uparrow$} \\
\midrule

Pretrained
& $0.00$
& $0.8112$
& $0.6548$
& $28.6885$
& $0.8866$
& $26.6049$ \\
\midrule

Naive
& $25.69\,{\scriptstyle\pm 15.91}$
& $0.7169\,{\scriptstyle\pm 0.0365}$
& $0.2521\,{\scriptstyle\pm 0.0133}$
& $32.0518\,{\scriptstyle\pm 0.2843}$
& $0.8871\,{\scriptstyle\pm 0.0031}$
& $27.0232\,{\scriptstyle\pm 0.0699}$ \\

NegGrad
& $0.00\,{\scriptstyle\pm 0.00}$
& $0.1870\,{\scriptstyle\pm 0.0484}$
& $0.0207\,{\scriptstyle\pm 0.0074}$
& $18.2460\,{\scriptstyle\pm 0.5542}$
& $0.2027\,{\scriptstyle\pm 0.0291}$
& $14.0139\,{\scriptstyle\pm 0.3815}$ \\

EraseDiff
& $43.75\,{\scriptstyle\pm 5.51}$
& $0.5312\,{\scriptstyle\pm 0.0492}$
& $0.1226\,{\scriptstyle\pm 0.0224}$
& $27.4282\,{\scriptstyle\pm 1.1531}$
& $0.8887\,{\scriptstyle\pm 0.0081}$
& $26.8076\,{\scriptstyle\pm 0.0649}$ \\
\midrule

SISS
& $87.50\,{\scriptstyle\pm 4.17}$
& $0.6728\,{\scriptstyle\pm 0.0505}$
& $\mathbf{0.1454}\,{\scriptstyle\pm 0.0106}$
& $26.1410\,{\scriptstyle\pm 0.6199}$
& $0.8907\,{\scriptstyle\pm 0.0059}$
& $\mathbf{26.8203}\,{\scriptstyle\pm 0.1474}$ \\

ReTrack
& $91.67\,{\scriptstyle\pm 2.09}$
& $0.7679\,{\scriptstyle\pm 0.0515}$
& $0.2049\,{\scriptstyle\pm 0.0312}$
& $25.8044\,{\scriptstyle\pm 0.7549}$
& $\mathbf{0.8913}\,{\scriptstyle\pm 0.0122}$
& $26.7416\,{\scriptstyle\pm 0.1601}$ \\

\textbf{Ours}
& $\mathbf{98.61}\,{\scriptstyle\pm 1.20}$
& $\mathbf{0.8052}\,{\scriptstyle\pm 0.0349}$
& $0.1485\,{\scriptstyle\pm 0.0189}$
& $\mathbf{26.5527}\,{\scriptstyle\pm 0.3633}$
& $0.8878\,{\scriptstyle\pm 0.0143}$
& $26.7499\,{\scriptstyle\pm 0.2079}$ \\

\bottomrule
\end{tabular}
}
\end{table}

\paragraph{Continual unlearning for text-to-image diffusion.}
Table~\ref{tab:main-sd48} reports the results after sequentially
unlearning 48 memorized datapoints from Stable Diffusion v1.4,
evaluating unlearning on deletion-target prompts and general
utility on COCO.
Naive largely preserves COCO utility but achieves limited
unlearning success.
NegGrad yields the lowest SSCD, yet its success rate is zero,
with severe degradation in image quality and prompt alignment.
EraseDiff also attains low SSCD but achieves only $43.75\%$ success.
These results highlight that low target similarity alone does not
ensure successful unlearning with acceptable image quality.

With $K_{\mathrm{mem}}=2$, our method achieves the highest unlearning success rate of $98.61\%$.
It also achieves lower SSCD and higher CLIP-IQA and CLIP scores than ReTrack on deletion-target prompts.
Although SISS achieves slightly lower SSCD, our method attains a higher success rate and better image quality and prompt alignment.
On COCO, our method’s CLIP-IQA and CLIP scores remain close to the pretrained reference.

\begin{table}[H]
\centering
\small
\caption{Continual unlearning performance on CelebA-HQ across
different deletion sequence lengths and memory capacities.
We report SSCD after 20, 50, and 100 deletions and FID after
100 deletions, with $K_{\mathrm{mem}}\in\{2,3,4\}$ for our method.
Results are the mean and standard deviation over three runs
with different target sets and deletion orders.}
\label{tab:deletion_order_sscd}
\setlength{\tabcolsep}{5pt}
\renewcommand{\arraystretch}{1.12}
\begin{tabular*}{\linewidth}{@{\extracolsep{\fill}}lcccc@{}}
\toprule
& \multicolumn{3}{c}{SSCD $\downarrow$}
& \multicolumn{1}{c@{}}{FID $\downarrow$} \\
\cmidrule(lr){2-4}\cmidrule(l){5-5}
Method & 20 deletions & 50 deletions & 100 deletions & 100 deletions \\
\midrule
SISS
    & $0.6936\,{\scriptstyle \pm\,0.0312}$
    & $0.6826\,{\scriptstyle \pm\,0.0209}$
    & $0.6873\,{\scriptstyle \pm\,0.0103}$
    & $\mathbf{17.70}\,{\scriptstyle \pm\,2.02}$ \\
ReTrack
    & $0.5493\,{\scriptstyle \pm\,0.0354}$
    & $0.4792\,{\scriptstyle \pm\,0.0395}$
    & $0.5030\,{\scriptstyle \pm\,0.0258}$
    & $27.35\,{\scriptstyle \pm\,4.50}$ \\
\midrule
\textbf{Ours} ($K_{\mathrm{mem}}=2$)
    & $\mathbf{0.4900}\,{\scriptstyle \pm\,0.0198}$
    & $0.4497\,{\scriptstyle \pm\,0.0528}$
    & $0.4846\,{\scriptstyle \pm\,0.0254}$
    & $25.98\,{\scriptstyle \pm\,2.13}$ \\
\textbf{Ours} ($K_{\mathrm{mem}}=3$)
    & $0.4921\,{\scriptstyle \pm\,0.0236}$
    & $0.4465\,{\scriptstyle \pm\,0.0370}$
    & $0.4778\,{\scriptstyle \pm\,0.0302}$
    & $22.18\,{\scriptstyle \pm\,1.71}$ \\
\textbf{Ours} ($K_{\mathrm{mem}}=4$)
    & $0.4913\,{\scriptstyle \pm\,0.0236}$
    & $\mathbf{0.4434}\,{\scriptstyle \pm\,0.0377}$
    & $\mathbf{0.4654}\,{\scriptstyle \pm\,0.0347}$
    & $22.73\,{\scriptstyle \pm\,1.30}$ \\
\bottomrule
\end{tabular*}
\end{table}

\subsection{Robustness and Design Analysis}
\label{sec:robustness-design}

We further analyze our method from two perspectives.
For robustness, we study longer deletion sequences under different memory capacities, with additional analyses of deletion-order sensitivity and unlearning persistence provided in Appendix~\ref{app:additional-experiments}.
For design analysis, we examine the choice of response-preservation penalty and memory-selection rule.
Resource costs are also reported in Appendix~\ref{app:resource-cost}.


\paragraph{Longer deletion sequences and memory capacity.}
Table~\ref{tab:deletion_order_sscd} evaluates different deletion
sequence lengths and memory capacities on CelebA-HQ.
Our method achieves lower SSCD than both SISS and ReTrack at
all evaluated sequence lengths, even with $K_{\mathrm{mem}}=2$.
The three capacities yield similar SSCD after 20 deletions,
whereas increasing memory capacity yields progressively lower
SSCD after 50 and 100 deletions.
At 100 deletions, $K_{\mathrm{mem}}=4$ reduces SSCD from
ReTrack's $0.5030$ to $0.4654$, a $7.5\%$ reduction.
All three capacities also achieve lower FID than ReTrack after
100 deletions, showing that improved target suppression is
accompanied by better generation quality.
Overall, our method maintains its advantage over longer deletion
sequences with a small, fixed memory capacity, while additional
memory provides greater benefits as the sequence length increases.

\begin{table}[H]
\caption{Comparison of response-preservation penalties on
CelebA-HQ and Stable Diffusion v1.4 after 50 and 48 deletions,
respectively.
All variants use $K_{\mathrm{mem}}=2$.
We report the mean and standard deviation over three runs
with different deletion orders.}
\label{tab:ablation-penalty}
\centering
\small
\setlength{\tabcolsep}{3pt}
\renewcommand{\arraystretch}{1.12}

\resizebox{\linewidth}{!}{%
\begin{tabular}{@{}lrrrrrr@{}}
\toprule
& \multicolumn{2}{c}{CelebA-HQ}
& \multicolumn{4}{c@{}}{Stable Diffusion v1.4} \\
\cmidrule(lr){2-3}\cmidrule(l){4-7}
Penalty
& \multicolumn{1}{c}{SSCD $\downarrow$}
& \multicolumn{1}{c}{FID $\downarrow$}
& \multicolumn{1}{c}{Success (\%) $\uparrow$}
& \multicolumn{1}{c}{CLIP-IQA $\uparrow$}
& \multicolumn{1}{c}{SSCD $\downarrow$}
& \multicolumn{1}{c@{}}{CLIP score $\uparrow$} \\
\midrule
Full-output
& $0.4326\,{\scriptstyle\pm 0.0404}$
& $33.69\,{\scriptstyle\pm 9.26}$
& $92.36\,{\scriptstyle\pm 1.20}$
& $0.8013\,{\scriptstyle\pm 0.0309}$
& $0.1155\,{\scriptstyle\pm 0.0168}$
& $\mathbf{26.6624}\,{\scriptstyle\pm 0.2519}$ \\
Symmetric
& $0.4391\,{\scriptstyle\pm 0.0279}$
& $30.03\,{\scriptstyle\pm 10.79}$
& $93.75\,{\scriptstyle\pm 3.61}$
& $0.7937\,{\scriptstyle\pm 0.0299}$
& $\mathbf{0.1061}\,{\scriptstyle\pm 0.0190}$
& $26.4982\,{\scriptstyle\pm 0.1673}$ \\
\textbf{Ours}
& $\mathbf{0.3975}\,{\scriptstyle\pm 0.0680}$
& $\mathbf{27.62}\,{\scriptstyle\pm 9.03}$
& $\mathbf{98.61}\,{\scriptstyle\pm 1.20}$
& $\mathbf{0.8052}\,{\scriptstyle\pm 0.0349}$
& $0.1485\,{\scriptstyle\pm 0.0189}$
& $26.5527\,{\scriptstyle\pm 0.3633}$ \\
\bottomrule
\end{tabular}%
}
\end{table}

\paragraph{Choice of response-preservation penalty.}
We compare three response-preservation penalties (Table~\ref{tab:ablation-penalty}).
Full-output preservation penalizes all deviations from the post-deletion response, whereas symmetric directional preservation penalizes both reversal and further movement along the recorded deletion direction. Our one-sided penalty penalizes only reversal, allowing movement beyond the post-deletion reference and orthogonal changes.
On CelebA-HQ, our one-sided penalty achieves the lowest SSCD and FID among the three variants.
On Stable Diffusion v1.4, it achieves the highest success rate of $98.61\%$, compared with $92.36\%$ for full-output preservation and $93.75\%$ for symmetric directional preservation.
It also yields the highest CLIP-IQA while maintaining comparable prompt alignment.
Although the other penalties attain lower SSCD, this does not translate into higher unlearning success.
Overall, these results support penalizing only reversal rather than imposing broader response constraints in the evaluated settings.

\begin{table}[H]
\caption{Comparison of memory-selection rules on CelebA-HQ
and Stable Diffusion v1.4 after 50 and 48 deletions, respectively.
All variants use $K_{\mathrm{mem}}=2$.
We report the mean and standard deviation over three runs
with different deletion orders.}
\label{tab:ablation-memory}
\centering
\small
\setlength{\tabcolsep}{2pt}
\renewcommand{\arraystretch}{1.12}
\begin{tabular*}{\linewidth}{@{\extracolsep{\fill}}lrrrrrr@{}}
\toprule
& \multicolumn{2}{c}{CelebA-HQ}
& \multicolumn{4}{c@{}}{Stable Diffusion v1.4} \\
\cmidrule(lr){2-3}\cmidrule(l){4-7}
Selection
& \multicolumn{1}{c}{SSCD $\downarrow$}
& \multicolumn{1}{c}{FID $\downarrow$}
& \multicolumn{1}{c}{Success (\%) $\uparrow$}
& \multicolumn{1}{c}{CLIP-IQA $\uparrow$}
& \multicolumn{1}{c}{SSCD $\downarrow$}
& \multicolumn{1}{c@{}}{CLIP $\uparrow$} \\
\midrule
FIFO
& $0.4400\,{\scriptstyle\pm 0.0269}$
& $32.78\,{\scriptstyle\pm 10.48}$
& $93.06\,{\scriptstyle\pm 3.18}$
& $0.7907\,{\scriptstyle\pm 0.0293}$
& $\mathbf{0.1205}\,{\scriptstyle\pm 0.0150}$
& $26.2898\,{\scriptstyle\pm 1.0175}$ \\
Random
& $0.4379\,{\scriptstyle\pm 0.0288}$
& $31.39\,{\scriptstyle\pm 9.26}$
& $93.75\,{\scriptstyle\pm 5.51}$
& $0.7916\,{\scriptstyle\pm 0.0458}$
& $0.1216\,{\scriptstyle\pm 0.0111}$
& $\mathbf{27.1066}\,{\scriptstyle\pm 0.4785}$ \\
\textbf{Ours}
& $\mathbf{0.3975}\,{\scriptstyle\pm 0.0680}$ 
& $\mathbf{27.62}\,{\scriptstyle\pm 9.03}$ 
& $\mathbf{98.61}\,{\scriptstyle\pm 1.20}$
& $\mathbf{0.8052}\,{\scriptstyle\pm 0.0349}$
& $0.1485\,{\scriptstyle\pm 0.0189}$
& $26.5527\,{\scriptstyle\pm 0.3633}$ \\
\bottomrule
\end{tabular*}
\end{table}

\paragraph{Choice of memory-selection rule.}
We compare our gradient-signature selection with FIFO (first-in, first-out) and Random
(Table~\ref{tab:ablation-memory}).
Each rule selects up to two records from the previous memory
and the preceding deletion record.
FIFO retains the most recent records, whereas Random samples
uniformly without replacement from this candidate set.
Previously discarded records are not reconsidered.

On CelebA-HQ, our selection achieves the lowest SSCD and FID,
improving both target suppression and generation quality.
On Stable Diffusion v1.4, it achieves the highest success rate
of $98.61\%$, compared with $93.06\%$ for FIFO and $93.75\%$
for Random, together with the highest CLIP-IQA.
Its success rate also exhibits the smallest standard deviation
across deletion orders.
These results support the effectiveness of gradient-signature
selection over recency-based or random selection, enabling
better unlearning performance with a small, fixed memory budget.

\section{Related Work}
\label{sec:relatedwork}

\paragraph{Unlearning in Diffusion Models.}
Unlearning in diffusion models has been studied at both the concept and datapoint levels.
Concept unlearning methods fine-tune pretrained diffusion models to suppress target concepts while preserving non-target generative capabilities
\citep{gandikota2023erasing,kumari2023ablating,heng2023selective,zhang2024forget}.
Other approaches employ saliency-guided parameter updates \citep{fan2024salun},
trainable concept-specific adapters \citep{lyu2024one},
or LoRA fine-tuning for multi-concept erasure \citep{lu2024mace}.
Further work incorporates adversarial training to improve the robustness of concept erasure against adversarial prompts
\citep{kim2024race,zhang2024defensive,srivatsan2025stereo}.
In contrast, data unlearning aims to remove the influence of specific training examples without suppressing the broader concepts they represent.
Representative approaches include EraseDiff's forgetting--retention optimization \citep{wu2025erasing}, SISS's importance-sampled objective \citep{alberti2025data}, ReTrack's redirection of target denoising trajectories toward retained examples \citep{shi2026retrack}.

\paragraph{Continual Unlearning in Diffusion Models.}
In practical deployments, deletion requests may arrive sequentially rather than as a single, predefined batch.
However, successful unlearning need not persist under subsequent model updates: previously unlearned concepts can re-emerge after fine-tuning, even on benign data \citep{George_2025_CVPR,suriyakumar2026unstable}.
This instability motivates continual unlearning, which requires earlier deletions to remain effective as new requests are processed.
While Meta-Unlearning \citep{gao2025meta} seeks to make concept erasure resistant to downstream fine-tuning, recent studies explicitly address sequential concept deletion.
\citet{lee2026continual} investigate regularization and semantic-aware gradient projection to mitigate cumulative parameter drift,
while LACU \citep{lacu} employs locality-aware replacement targets and replay to protect neighboring retained concepts.
The same challenge arises in data unlearning, where each new request must be satisfied while maintaining the effectiveness of earlier deletions and preserving generative utility.
Unlike concept-level suppression, this requires removing individual samples' influence without suppressing their associated concepts, provided that these concepts remain supported by retained data.
\section{Limitations}

Our framework uses a fixed-capacity transition bank with a memory budget set in advance. Adaptive capacity allocation could improve flexibility when the number of deletion requests is unknown. Each transition is represented by a finite set of probes, which may not capture the deletion-induced response change exhaustively. Maintaining the bank also adds memory and computation, although these costs remain bounded for fixed capacity and probe count.

\section{Conclusion}

We presented a transition-based framework for sustaining data unlearning as deletion requests accumulate. Across unconditional and text-to-image diffusion models, our method strengthens deletion persistence while preserving generative utility. On Stable Diffusion, it achieves the highest unlearning success among the evaluated methods, with high-quality, prompt-aligned generation and general utility close to the pretrained reference. These gains are achieved with a small, fixed transition memory whose storage does not grow with the number of deletion requests. Overall, our results support preventing the reversal of deletion-induced response changes, rather than matching full post-deletion responses, as an effective approach to continual unlearning.
\newpage
\bibliography{references}
\bibliographystyle{iclr2027_conference}
\clearpage

\appendix
\section*{Appendix}
\paragraph{Appendix Overview.}
This appendix provides additional details and analyses supporting the main paper.
Appendix~\ref{app:fig_1_prompt} lists the prompts used for the qualitative results in Figure~\ref{fig:fig1}.
Appendix~\ref{app:algorithm} presents the overall continual unlearning algorithm,
followed by probe construction and stored-state details in Appendix~\ref{app:probe}
and implementation details in Appendix~\ref{app:training-details}.
Appendix~\ref{app:additional-experiments} provides additional experiments on
request-order sensitivity and unlearning persistence, while
Appendix~\ref{app:resource-cost} reports memory and runtime costs.
Finally, Appendix~\ref{app:local-analysis} provides a local analysis of the proposed
reversal regularization.

\section{Prompts for Figure \ref{fig:fig1}}
\label{app:fig_1_prompt}
The following prompts were used to generate the samples shown in Figure \ref{fig:fig1}:

\begin{itemize}

    \item Aaron Paul to Play Luke Skywalker at LACMA Reading of <i>The Empire Strikes Back</i>

    \item Video: Charlize Theron in Trailer for New Gillian Flynn Adaptation, <i>Dark Places</i>

    \item <em>Bloodborne</em> Video: Sony Explains the Game's Procedurally Generated Dungeons
    
    \item Mothers influence on her young hippo
\end{itemize}
\Needspace{36\baselineskip}
\section{Algorithm Overview}
\label{app:algorithm}

\begin{algorithm}[H]
\caption{Continual data unlearning: overall procedure}
\label{alg:transition-overall-defined}
\begingroup
\small
\newcommand{\tv}[1]{\mathsf{#1}}
\algrenewcommand\algorithmicindent{1.15em}
\algrenewcommand\algorithmiccomment[1]{\hfill{\footnotesize\itshape #1}}
\begin{algorithmic}[1]
\Require $\theta$: pretrained denoiser parameters, $\mathcal X$: original training data
\Require $N$: steps per request, $Q$: probes per target, $K$: maximum older records
\Statex \textbf{Probe:} a fixed noisy input, its timestep, and optional conditioning.
\Statex \textbf{Record:} probes, response-change directions, post-deletion references, and a check-frequency weight.
\Statex \textbf{Reversal:} backward movement from a post-deletion reference along its recorded direction.
\Statex
\State $\tv{retained}\gets\mathcal X$
\State $\tv{newest}\gets\varnothing$ \Comment{record of the most recent deletion}
\State $\tv{bank}\gets\varnothing$ \Comment{older-record memory with capacity $K$}
\For{each incoming deletion target $a$}
    \State $\tv{retained}\gets\tv{retained}\setminus\{a\}$
    \State $\tv{probes}\gets Q$ fixed probes constructed from $a$
    \State $\tv{before}\gets$ denoiser outputs under $\theta$ on $\tv{probes}$
    \Statex
    \For{$N$ optimizer steps}
        \State $\tv{base}\gets$ ReTrack gradient for $(\theta,a,\tv{retained})$
        \State $\tv{correction}\gets\Call{CorrectReversal}{\theta,\tv{newest},\tv{bank},\tv{base}}$
        \Statex \hspace{2.3em}\textit{Bounded penalty gradient, or zero when no reversal is detected.}
        \State $\theta\gets\operatorname{OptimizerStep}(\theta,\tv{base}+\tv{correction})$
    \EndFor
    \Statex
    \State $\tv{after}\gets$ denoiser outputs under updated $\theta$ on the same $\tv{probes}$
    \State $\tv{record}\gets\Call{RecordTransition}{\tv{probes},\tv{before},\tv{after}}$
    \State Move $\tv{newest}$ into $\tv{bank}$, if nonempty
    \State $\tv{bank}\gets\Call{UpdateMemory}{\theta,\tv{bank},K}$ \Comment{retain up to $K$ representative records}
    \State $\tv{newest}\gets\tv{record}$ \Comment{checked first during the next request}
    \State Discard raw target $a$, $\tv{before}$, and $\tv{after}$
\EndFor
\State \Return $\theta$ \Comment{model after all deletion requests}
\end{algorithmic}
\endgroup
\end{algorithm}

\newpage

\begin{algorithm}[H]
\caption{Record the response transition after a deletion}
\label{alg:transition-operations-readable}
\label{alg:transition-record}
\begingroup
\small
\newcommand{\tv}[1]{\mathsf{#1}}
\algrenewcommand\algorithmicindent{1.15em}
\algrenewcommand\algorithmiccomment[1]{\hfill{\footnotesize\itshape #1}}
\begin{algorithmic}[1]
\Require $w_0>0$: initial check weight
\Statex $\operatorname{Project}(y,d)=\langle y,d\rangle/\lVert d\rVert_2$, for response $y$ and nonzero change $d$.
\Statex
\Function{RecordTransition}{$\tv{probes},\tv{before},\tv{after}$}
    \State $\tv{record}\gets\varnothing$
    \For{each $\tv{probe}\in\tv{probes}$ with a nonzero response change}
        \State $\tv{direction}\gets\tv{after}[\tv{probe}]-\tv{before}[\tv{probe}]$
        \State $\tv{reference}\gets\operatorname{Project}(\tv{after}[\tv{probe}],\tv{direction})$
        \State Append $(\tv{probe},\tv{direction},\tv{reference})$ to $\tv{record}$
    \EndFor
    \State \Return $\tv{record}$ with check weight $w_0$ \Comment{future check frequency}
\EndFunction
\end{algorithmic}
\endgroup
\end{algorithm}

\begin{algorithm}[H]
\caption{Detect reversal and compute a bounded correction}
\label{alg:transition-correct}
\begingroup
\small
\newcommand{\tv}[1]{\mathsf{#1}}
\algrenewcommand\algorithmicindent{1.15em}
\algrenewcommand\algorithmiccomment[1]{\hfill{\footnotesize\itshape #1}}
\begin{algorithmic}[1]
\Require $\omega\geq0$: correction scale, $\rho>0$: norm-cap ratio
\Statex $\Phi_\theta$: model response, $\theta$: model parameters, $g$: base gradient.
\Statex $\operatorname{Project}(y,d)=\langle y,d\rangle/\lVert d\rVert_2$ for nonzero $d$.
\Statex
\Function{CorrectReversal}{$\theta,\tv{newest},\tv{bank},g$}
    \For{$\tv{record}$: $\tv{newest}$, then one weight-scheduled record from $\tv{bank}$}
        \State Skip missing or empty records
        \State $(\tv{probe},\tv{direction},\tv{reference})\gets$ next cyclic entry of $\tv{record}$
        \State $\tv{current}\gets\operatorname{Project}(\Phi_\theta(\tv{probe}),\tv{direction})$
        \State $\tv{reversal}\gets\max(\tv{reference}-\tv{current},\,0)$
        \If{$\tv{reversal}>0$}
            \State $h\gets\omega\nabla_\theta(\tv{reversal}^{2})$
            \State Rescale $h$ if needed to have norm at most $\rho\lVert g\rVert_2$
            \State \Return $h$
        \EndIf
    \EndFor
    \State \Return $0$
\EndFunction
\end{algorithmic}
\endgroup
\end{algorithm}

\begin{algorithm}[H]
\caption{Select representative records and redistribute check weights}
\label{alg:transition-memory}
\begingroup
\small
\newcommand{\tv}[1]{\mathsf{#1}}
\algrenewcommand\algorithmicindent{1.15em}
\algrenewcommand\algorithmiccomment[1]{\hfill{\footnotesize\itshape #1}}
\begin{algorithmic}[1]
\Statex $\theta$: model parameters, $K$: maximum number of older records.
\Statex
\Function{UpdateMemory}{$\theta,\tv{bank},K$}
    \If{$|\tv{bank}|\leq K$} \State \Return $\tv{bank}$ \EndIf
    \For{each record in $\tv{bank}$}
        \State $\tv{gradients}\gets$ gradients of all current probe projections w.r.t.\ $\theta$
        \State $\tv{signature}\gets$ concatenate compressed, normalized gradients in probe order
    \EndFor
    \For{each record in $\tv{bank}$}
        \State Fit its signature by a weighted average of the other signatures
    \EndFor
    \State Discard the record with the smallest squared fitting error
    \State Redistribute its check weight using its fitted averaging weights
    \State \Return $\tv{bank}$
\EndFunction
\end{algorithmic}
\endgroup
\end{algorithm}

\clearpage
\section{Probe Construction and Storage}
\label{app:probe}
\label{app:probe-implementation}

\paragraph{Response construction.}
Each probe is constructed from the deletion target using a fixed diffusion
timestep, noise realization, and, when applicable, conditioning. Writing
$h$ for the identity in pixel-space diffusion and the fixed VAE encoding
in latent diffusion,
\[
z_s(x,\epsilon)=\gamma_s h(x)+\sigma_s\epsilon,\qquad
\Phi(\theta,\xi)=
\epsilon_\theta(z_s^{\mathrm{tar}},s,c).
\]
Here $\gamma_s$ and $\sigma_s$ are the forward-process signal and noise
coefficients, and conditioning $c$ is omitted for unconditional models.
The noisy input, timestep, and conditioning remain fixed across
pre-deletion, post-deletion, and subsequent evaluations. Thus, changes
in the recorded response arise from model updates rather than newly
sampled probe inputs.

\paragraph{Probe sampling.}
We use $Q=4$ probes per record. For CIFAR-10 and CelebA-HQ, timesteps
are sampled uniformly from $\{200,\ldots,999\}$, with an independently
sampled Gaussian noise realization for each probe. Stable Diffusion uses
timesteps $\{225,375,625,875\}$ and the original target prompt, with a
fixed Gaussian noise realization at each timestep. These training probes
are distinct from the timestep-250 reconstructions and prompt-conditioned
samples used to evaluate unlearning.

\paragraph{Persistent state.}
The pixel-space implementations store the noisy target inputs, their
timesteps, the displacement $d$, and the scalar post-deletion projection
$\tau$ for each probe. CIFAR-10 retains these floating-point tensors in
FP32. CelebA-HQ stores noisy inputs and displacements in FP16, with
projections computed in FP32 using the stored displacements.
The separate pre- and post-deletion responses are not needed for later
checks. Stable Diffusion instead stores the noisy target latents,
text-conditioning embeddings, timesteps, pre-deletion responses $y^-$, and displacements $d$ in FP32. 

Each retained record also carries a service weight and a cyclic probe
cursor. The newest record is kept separately for one request before
entering memory selection, giving at most $K_{\mathrm{mem}}+1$ records
between requests. Full parameter gradients and their sketches are
temporary computations, not persistent historical records.

\clearpage
\section{Implementation Details}
\label{app:main-table-protocol}
\label{app:training-details}

\begin{table}[H]
\centering
\small
\caption{Training and correction hyperparameters. $K_{\mathrm{mem}}$
counts older records and excludes the separately stored newest record.
Memory-capacity and penalty variants override the corresponding entries.}
\label{tab:training-settings}
\setlength{\tabcolsep}{5pt}
\renewcommand{\arraystretch}{1.12}
\begin{tabular*}{\linewidth}{@{\extracolsep{\fill}}lccc@{}}
\toprule
Parameter & CIFAR-10 & CelebA-HQ & Stable Diffusion v1.4 \\
\midrule
Updates per request & $60$ & $40$ & $35$ \\
Learning rate & $2\times10^{-5}$ & $5\times10^{-6}$ & $10^{-5}$ \\
AdamW $(\beta_1,\beta_2)$ & $(0.95,0.999)$ & $(0.95,0.999)$ & $(0.9,0.999)$ \\
Weight decay & $10^{-6}$ & $10^{-6}$ & $10^{-2}$ \\
AdamW $\epsilon$ & $10^{-8}$ & $10^{-8}$ & $10^{-8}$ \\
Probes $Q$ & $4$ & $4$ & $4$ \\
Memory capacity $K_{\mathrm{mem}}$ & $4$ & $4$ & $2$ \\
Correction scale $\omega$ & $1$ & $1$ & $0.5$ \\
Relative norm cap $\rho$ & $0.20$ & $0.10$ & $0.02$ \\
\bottomrule
\end{tabular*}
\end{table}

\paragraph{Optimization.}
Table~\ref{tab:training-settings} summarizes the ReTrack-based training
and correction settings.
We update the denoiser in FP32 and carry its parameters across deletion
requests, while reinitializing AdamW for each request. Learning rates
are constant, with no warm-up. For Stable Diffusion, the entire UNet is
updated while the VAE and text encoder remain fixed.
The correction is added to the base gradient before the optimizer step.
Its norm is bounded relative to the base gradient, rather than the
parameter update produced by AdamW.

\paragraph{Base-update settings.}
CIFAR-10 uses batches of 128 with one gradient-accumulation step.
The CelebA-HQ ReTrack-based runs use batches of four with two
accumulation steps. SISS uses batches of four with four
accumulation steps.
ReTrack uses ten neighbors, with objective coefficient $0.02$ on CIFAR-10,
$0.005$ on CelebA-HQ, and $0.5$ on Stable Diffusion.
SISS uses mixture coefficient $0.5$ and rescaling factors 5 and 500
on CIFAR-10 and CelebA-HQ, respectively.
Previously deleted targets are excluded from positive supervision
throughout each sequence.

\paragraph{Gradient signatures and selection.}
We use two CountSketch-style repeats of dimension 8,192 with sketch seed
27182818, concatenated and scaled by $1/\sqrt{2}$. Both repeats use a
shared parameter ordering and bucket assignment across records, with
independent random signs. Each probe sketch is normalized separately.
The four normalized sketches are concatenated and scaled by $1/\sqrt{Q}$.
The implementation differentiates normalized progress
$p=1+m/\lVert d\rVert_2$. Its gradient is a positive scalar multiple of
the margin gradient, so the normalized sketch direction is unchanged.

The candidate pool contains at most $K_{\mathrm{mem}}+1$ records.
We enumerate subsets of the required size and solve the simplex-constrained
least-squares fit for each candidate by enumerating active coefficient sets.
The inner solves use FP64 and a feasibility tolerance of $10^{-8}$.
Discarded records transfer their service weights according to the fitted
coefficients. Largest-remainder allocation, with at least one scheduled
visit per retained record, converts these weights into a finite checking
schedule. Newest-record checks retain priority over this older-record
schedule.
\clearpage
\section{Additional Experiments}
\label{app:additional-experiments}

\subsection{Sensitivity to Request Order}
\label{app:order-sensitivity}

\begin{table}[H]
\caption{Sensitivity to request order on CelebA-HQ and Stable Diffusion v1.4.
For each setting, we evaluate five different deletion orders using
a fixed target set: 50 targets for CelebA-HQ and 30 targets for
Stable Diffusion.
We report the mean and standard deviation across these orders.
Our method uses $K_{\mathrm{mem}}=2$ in both settings.}
\label{tab:order-sensitivity}
\centering
\small
\setlength{\tabcolsep}{2pt}
\renewcommand{\arraystretch}{1.12}

\begin{tabular*}{\linewidth}{@{\extracolsep{\fill}}lrrrrrr@{}}
\toprule
& \multicolumn{2}{c}{CelebA-HQ}
& \multicolumn{4}{c@{}}{Stable Diffusion v1.4} \\
\cmidrule(lr){2-3}\cmidrule(l){4-7}

Method
& \multicolumn{1}{c}{SSCD $\downarrow$}
& \multicolumn{1}{c}{FID $\downarrow$}
& \multicolumn{1}{c}{Success (\%) $\uparrow$}
& \multicolumn{1}{c}{CLIP-IQA $\uparrow$}
& \multicolumn{1}{c}{SSCD $\downarrow$}
& \multicolumn{1}{c@{}}{CLIP score $\uparrow$} \\
\midrule

SISS
& $0.6985\,{\scriptstyle\pm 0.0256}$
& $\mathbf{25.06}\,{\scriptstyle\pm 4.78}$
& $74.67\,{\scriptstyle\pm 14.64}$
& $0.6782\,{\scriptstyle\pm 0.0574}$
& $0.1611\,{\scriptstyle\pm 0.0218}$
& $26.2896\,{\scriptstyle\pm 0.8763}$ \\

ReTrack
& $0.4292\,{\scriptstyle\pm 0.0228}$
& $29.01\,{\scriptstyle\pm 2.43}$
& $85.33\,{\scriptstyle\pm 9.31}$
& $0.7942\,{\scriptstyle\pm 0.0346}$
& $0.2286\,{\scriptstyle\pm 0.0163}$
& $26.7059\,{\scriptstyle\pm 0.7214}$ \\

\midrule
\textbf{Ours}
& $\mathbf{0.4115}\,{\scriptstyle\pm 0.0187}$
& $28.90\,{\scriptstyle\pm 3.24}$
& $\mathbf{90.67}\,{\scriptstyle\pm 2.79}$
& $\mathbf{0.7969}\,{\scriptstyle\pm 0.0171}$
& $\mathbf{0.1545}\,{\scriptstyle\pm 0.0250}$
& $\mathbf{26.7712}\,{\scriptstyle\pm 0.5602}$ \\

\bottomrule
\end{tabular*}
\end{table}

We assess sensitivity to request order by varying the deletion
sequence while keeping the target set fixed
(Table~\ref{tab:order-sensitivity}).
On CelebA-HQ, our method achieves the lowest SSCD with the
smallest standard deviation, while also attaining a slightly
lower FID than ReTrack.
On Stable Diffusion v1.4, it achieves the highest success rate with
substantially lower variability, reaching $(90.67\pm2.79)\%$
compared with $(85.33\pm9.31)\%$ for ReTrack and
$(74.67\pm14.64)\%$ for SISS.
It also achieves lower SSCD and higher CLIP-IQA and CLIP scores
than both baselines, with smaller standard deviations in the
two CLIP-based metrics.
Overall, our method improves both SSCD on CelebA-HQ and
unlearning success on Stable Diffusion v1.4, while reducing their
variability across deletion orders and maintaining competitive
generation quality.

\subsection{Persistence of Unlearning}
\label{app:unlearning-persistence}

\begin{table}[H]
\caption{SSCD rebound on CelebA-HQ over three runs of 50 deletions.
Ours uses $K_{\mathrm{mem}}=4$.
The lower panel requires immediate SSCD $\leq0.5$ under all methods.}
\label{tab:celeb-sscd-rebound}
\centering
\small
\setlength{\tabcolsep}{4pt}
\renewcommand{\arraystretch}{1.12}
\begin{tabular*}{\linewidth}{@{\extracolsep{\fill}}lrrr@{}}
\toprule
& \multicolumn{3}{c@{}}{SSCD $\downarrow$} \\
\cmidrule(l){2-4}
Method
& \multicolumn{1}{c}{Immediately after deletion}
& \multicolumn{1}{c}{After 50 deletions}
& \multicolumn{1}{c@{}}{Mean rebound} \\
\midrule
\multicolumn{4}{l}{All evaluated targets } \\
SISS & $0.6069\,{\scriptstyle\pm 0.0044}$
& $0.6814\,{\scriptstyle\pm 0.0093}$ & $\mathbf{0.0746}\,{\scriptstyle\pm 0.0136}$ \\
ReTrack & $0.3631\,{\scriptstyle\pm 0.0237}$
& $0.4810\,{\scriptstyle\pm 0.0091}$ & $0.1210\,{\scriptstyle\pm 0.0151}$ \\
\textbf{Ours} & $\mathbf{0.3529}\,{\scriptstyle\pm 0.0153}$
& $\mathbf{0.4408}\,{\scriptstyle\pm 0.0326}$ & $0.1001\,{\scriptstyle\pm 0.0325}$ \\
\midrule
\multicolumn{4}{l}{Common targets with low post-deletion SSCD } \\
SISS & $0.3913\,{\scriptstyle\pm 0.0293}$
& $0.5232\,{\scriptstyle\pm 0.0731}$ & $0.1319\,{\scriptstyle\pm 0.0438}$ \\
ReTrack & $\mathbf{0.1831}\,{\scriptstyle\pm 0.0245}$
& $0.3094\,{\scriptstyle\pm 0.0409}$ & $0.1269\,{\scriptstyle\pm 0.0190}$ \\
\textbf{Ours} & $0.2127\,{\scriptstyle\pm 0.0097}$
& $\mathbf{0.3009}\,{\scriptstyle\pm 0.0152}$ & $\mathbf{0.0919}\,{\scriptstyle\pm 0.0182}$ \\
\bottomrule
\end{tabular*}
\end{table}

We measure persistence by averaging positive SSCD increases from
immediately after deletion to the final model under matched evaluation
conditions, counting decreases as zero (Table~\ref{tab:celeb-sscd-rebound}).
Our method reduces rebound relative to ReTrack ($0.1210\to0.1001$)
while achieving lower immediate and final SSCD.
SISS has the smallest rebound but much higher SSCD at both time points;
low rebound alone therefore does not imply effective unlearning.

A small rebound can also reflect weak initial unlearning, as illustrated
by SISS's high immediate SSCD. We therefore additionally compare common
targets with immediate SSCD $\leq0.5$ under all methods. On this subset,
our method reduces rebound by $27.6\%$ relative to ReTrack and achieves
lower final SSCD despite starting from higher post-deletion similarity.
This supports improved persistence beyond stronger initial suppression
on the evaluated subset.

\section{Memory and Runtime}
\label{app:resource-cost}

We profile ReTrack and our method on the first ten requests of three
CelebA-HQ deletion sequences, using 40 optimizer steps per request, $Q=4$,
and $\rho=0.20$. Each configuration starts from the same pretrained model
and maintains its own evolving parameters and records. Measurements use a Blackwell GPU, 12 CPUs, four computation threads,
and FP32 with TF32 disabled. All runs use the same GPU model.

Table~\ref{tab:resource-cost} summarizes requests 7--10, when all memory
capacities are filled. We first average within each sequence and then report
the mean and sample standard deviation across three sequences. Request time
includes training, neighbor search, process startup, model loading,
checkpoint I/O, transition recording, and memory selection, but excludes
evaluation. GPU memory is the mean of per-request peak PyTorch allocations
during training, including optimizer and correction state. Memory used outside the training steps is not included.

\begin{table}[htbp]
\centering
\small
\caption{CelebA-HQ resource costs across three deletion sequences.}
\label{tab:resource-cost}
\setlength{\tabcolsep}{6pt}
\begin{tabular}{@{}lrrr@{}}
\toprule
Method & Time/request (s) & Relative time & Training peak (MiB) \\
\midrule
ReTrack & $224.47\pm19.38$ & $1.00\times$ & $21317.48\pm0.00$ \\
Ours ($K_{\mathrm{mem}}=2$) & $277.42\pm16.23$ & $1.24\times$ & $21331.51\pm1.16$ \\
Ours ($K_{\mathrm{mem}}=3$) & $272.15\pm6.97$ & $1.22\times$ & $21331.90\pm0.67$ \\
Ours ($K_{\mathrm{mem}}=4$) & $285.41\pm5.37$ & $1.28\times$ & $21332.28\pm0.67$ \\
\bottomrule
\end{tabular}
\end{table}

Relative time is the mean ratio to ReTrack over matched deletion sequences.
The measured request-time overhead is approximately 22--28\%, while
training peak allocation increases by approximately 14--15 MiB.
Timing includes startup and I/O costs, and no unusually slow training
steps are excluded. Runtime does not increase monotonically with capacity
in these measurements, so the small differences between capacities should
not be interpreted as a consistent speed advantage.

\clearpage
\section{Local Analysis of Reversal Regularization}
\label{app:local-analysis}

\paragraph{Notation.}
Fix one probe and its recorded quantities $d\neq0$ and $y^+$.
Omitting request and probe indices, write $y(\theta)$ for the response and
define
\[
\hat d=\frac{d}{\lVert d\rVert_2},\qquad
m(\theta)=\langle y(\theta)-y^+,\hat d\rangle,\qquad
G(\theta)=[-m(\theta)]_+^2.
\]

\begin{proposition}[Response-space distance]
\label{prop:response-distance}
Let $\mathcal H=\{y:\langle y-y^+,\hat d\rangle\geq0\}$.
Then
\begin{equation}
G(\theta)=\operatorname{dist}(y(\theta),\mathcal H)^2,
\qquad
\Pi_{\mathcal H}(y(\theta))=y(\theta)+[-m(\theta)]_+\hat d,
\label{eq:app-halfspace-distance}
\end{equation}
where $\Pi_{\mathcal H}$ is Euclidean projection onto $\mathcal H$.
\end{proposition}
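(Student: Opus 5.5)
The statement is a fact about Euclidean projection onto a closed half-space, applied at the point $y=y(\theta)$. No structure of the denoiser is needed, so I will fix $\theta$ and write $y=y(\theta)$ and $m=m(\theta)=\langle y-y^+,\hat d\rangle$. Since $\lVert\hat d\rVert_2=1$, the set $\mathcal H=\{z:\langle z-y^+,\hat d\rangle\ge0\}$ is a nonempty closed convex half-space with unit inward normal $\hat d$. The projection $\Pi_{\mathcal H}(y)$ therefore exists and is unique. The plan is to guess the candidate $p=y+[-m]_+\hat d$, verify that it is the projection, and then read off the distance.

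\emph{Case $m\ge0$.} Here $y\in\mathcal H$, so $\Pi_{\mathcal H}(y)=y$ and $\operatorname{dist}(y,\mathcal H)=0$. This agrees with $[-m]_+=0$ and $G(\theta)=0$.

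\emph{Case $m<0$.} Here $p=y-m\hat d$, and $\langle p-y^+,\hat d\rangle=m-m\lVert\hat d\rVert^2=0$, so $p$ lies on the boundary hyperplane and in particular $p\in\mathcal H$. To show $p$ is the projection, I will use the variational characterization: it suffices that $\langle y-p,z-p\rangle\le0$ for every $z\in\mathcal H$. Since $y-p=m\hat d$, this inner product equals
\[
m\,\langle z-p,\hat d\rangle=m\,\langle z-y^+,\hat d\rangle,
\]
where I used $\langle p-y^+,\hat d\rangle=0$. This is a product of $m<0$ and a nonnegative number, so it is $\le0$ for all $z\in\mathcal H$. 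Alternatively, one can argue directly: for any $z\in\mathcal H$, $\lVert y-z\rVert\ge\langle z-y,\hat d\rangle\ge -m$ by Cauchy--Schwarz, and this bound is attained at $z=p$.

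In either case $\operatorname{dist}(y,\mathcal H)=\lVert y-p\rVert=[-m]_+\lVert\hat d\rVert=[-m]_+$. Squaring gives $\operatorname{dist}(y,\mathcal H)^2=[-m]_+^2=G(\theta)$, which completes both identities in \eqref{eq:app-halfspace-distance}.

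There is no real obstacle here. The only points needing care are using the normalization $\lVert\hat d\rVert=1$, which relies on $d\neq0$, and checking that the two cases agree at $m=0$.
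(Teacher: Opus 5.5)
Your proof is correct and follows the paper's argument. The paper uses the same case split on the sign of $m(\theta)$ and, for $m(\theta)<0$, exactly your alternative bound: every feasible displacement $\delta$ satisfies $\lVert\delta\rVert_2\geq\langle\delta,\hat d\rangle\geq-m(\theta)$, with equality at $\delta=-m(\theta)\hat d$. Your main certificate, the variational inequality $\langle y-p,z-p\rangle\leq0$, is an equally valid variant that identifies $\Pi_{\mathcal H}(y(\theta))$ explicitly, whereas the paper leaves the uniqueness of the minimizer implicit in the Cauchy--Schwarz equality case.
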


\begin{proof}
If $m(\theta)\geq0$, then $y(\theta)\in\mathcal H$ and both distances
vanish. Otherwise, any feasible displacement $\delta$ satisfies
$\langle\delta,\hat d\rangle\geq-m(\theta)$.
Cauchy--Schwarz gives $\lVert\delta\rVert_2\geq-m(\theta)$, with equality
attained by $\delta=-m(\theta)\hat d$.
Squaring this minimum distance proves the claim.
\end{proof}

\begin{proposition}[Local margin effect of a clipped correction]
\label{prop:local-margin-effect}
Suppose $m(\theta)<0$, $\omega,\rho\geq0$, and $\nabla m$ is $L$-Lipschitz
on a convex neighborhood containing the two update segments below.
Set $g=g^{\mathrm{base}}$ and
\[
h=\operatorname{clip}_{\rho\lVert g\rVert_2}(\omega\nabla G(\theta)),
\]
where $\operatorname{clip}$ denotes Euclidean norm clipping.
There exists $\lambda\in[0,\omega]$ with
$h=\lambda\nabla G(\theta)$.
For $\eta>0$, define the ordinary gradient-descent steps
$\theta^+=\theta-\eta(g+h)$ and $\theta^0=\theta-\eta g$. Then
\begin{equation}
m(\theta^+)-m(\theta^0)
=2\eta\lambda[-m(\theta)]\lVert\nabla m(\theta)\rVert_2^2
+r_\eta,
\label{eq:app-local-correction}
\end{equation}
with
\begin{equation}
|r_\eta|\leq\frac{L\eta^2}{2}
\left(\lVert g+h\rVert_2^2+\lVert g\rVert_2^2\right).
\label{eq:app-correction-remainder}
\end{equation}
\end{proposition}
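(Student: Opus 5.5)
Compute $\nabla G$ and use the fact that norm clipping only rescales. Since $m(\theta)<0$ and $G=[-m]_+^2$, on a neighborhood where $m<0$ we have $G=m^2$, so
\[
\nabla G(\theta)=2m(\theta)\nabla m(\theta)=-2[-m(\theta)]\nabla m(\theta).
\]
Norm clipping to radius $c=\rho\lVert g\rVert_2$ returns $\min\{1,c/\lVert v\rVert_2\}\,v$ for $v=\omega\nabla G(\theta)\neq0$, and returns $v=0$ otherwise. Hence $h=\lambda\nabla G(\theta)$ with $\lambda=\omega\min\{1,c/\lVert\omega\nabla G\rVert_2\}\in[0,\omega]$. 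If $\nabla G=0$ or $\omega=0$, any $\lambda$ works, say $\lambda=0$. This settles the first claim and gives
\[
h=-2\lambda[-m(\theta)]\nabla m(\theta).
\]

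For the main identity, expand $m$ to first order along each update segment separately. Since $\nabla m$ is $L$-Lipschitz on a convex set containing both segments, the standard descent-lemma estimate
\[
\bigl|m(\theta+\Delta)-m(\theta)-\langle\nabla m(\theta),\Delta\rangle\bigr|\le \tfrac L2\lVert\Delta\rVert_2^2
\]
follows by writing $m(\theta+\Delta)-m(\theta)=\int_0^1\langle\nabla m(\theta+s\Delta),\Delta\rangle\,ds$ and bounding the integrand's deviation by $Ls\lVert\Delta\rVert_2^2$. Apply this with $\Delta^+=-\eta(g+h)$ and $\Delta^0=-\eta g$, giving remainders $r^+$ and $r^0$ bounded by $\frac{L\eta^2}{2}\lVert g+h\rVert_2^2$ and $\frac{L\eta^2}{2}\lVert g\rVert_2^2$. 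Subtracting the two expansions, the $m(\theta)$ terms and the $\langle\nabla m,-\eta g\rangle$ terms cancel. What remains is
\[
-\eta\langle\nabla m(\theta),h\rangle=2\eta\lambda[-m(\theta)]\lVert\nabla m(\theta)\rVert_2^2.
\]
Setting $r_\eta=r^+-r^0$ and applying the triangle inequality yields \eqref{eq:app-correction-remainder}.

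The only delicate point is differentiability of $G$ at $\theta$, since $[\cdot]_+^2$ has a kink at zero. Because $m(\theta)<0$ strictly and $m$ is continuous (indeed $C^1$), $G$ coincides with $m^2$ near $\theta$, so the gradient formula holds at $\theta$ itself. No differentiability of $G$ along the segments is needed, because the Taylor argument uses only $m$. I do not expect a substantive obstacle beyond stating this carefully and handling the degenerate case $\nabla G(\theta)=0$ in the clip.
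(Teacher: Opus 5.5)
Your proposal is correct and follows essentially the same route as the paper. Both arguments treat clipping as a rescaling by a factor in $[0,1]$ to obtain $\lambda\in[0,\omega]$, use $\nabla G(\theta)=2m(\theta)\nabla m(\theta)$ from $m(\theta)<0$, apply the Lipschitz-gradient expansion of $m$ along both steps, and finish by subtracting and using the triangle inequality. As a minor aside, $[\cdot]_+^2$ is actually $C^1$ (its derivative $2[\cdot]_+$ is continuous), so there is no kink to worry about, though your local argument via $G=m^2$ near $\theta$ is also valid.
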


\begin{proof}
Euclidean norm clipping rescales its input by a factor in $[0,1]$, giving
the stated $\lambda$. Since $m(\theta)<0$,
$\nabla G(\theta)=2m(\theta)\nabla m(\theta)$.
Lipschitz continuity of the gradient implies
\[
m(\theta-\eta v)=m(\theta)-\eta\langle\nabla m(\theta),v\rangle+R_v,
\qquad |R_v|\leq\tfrac{L\eta^2}{2}\lVert v\rVert_2^2.
\]
Subtract the expansions for $v=g+h$ and $v=g$.
The linear term is $-\eta\langle\nabla m,h\rangle
=2\eta\lambda[-m]\lVert\nabla m\rVert_2^2$.
The triangle inequality bounds $r_\eta=R_{g+h}-R_g$ as stated.
\end{proof}

\paragraph{Scope.}
Proposition~\ref{prop:local-margin-effect} gives a nonnegative first-order
contribution relative to the base step. Proposition~\ref{prop:response-distance}
characterizes response-space geometry.
\end{document}